\newcommand{\bmx}{\bm{x}}
\newcommand{\bmy}{\bm{y}}
\newcommand{\bmz}{\bm{z}}
\newcommand{\bmv}{\bm{v}}
\newcommand{\bmf}{\bm{f}}
\newcommand{\pr}{\mathrm{P}}
\newtheorem{mytheorem}[theorem]{Theorem}
\newtheorem{mylemma}[theorem]{Lemma}
\begin{document}
\title{A First Running Time Analysis of the
Strength Pareto Evolutionary Algorithm 2 (SPEA2)}
\titlerunning{Running Time Analysis of SPEA2}
%


\author{Shengjie Ren\inst{1}\and
Chao Bian\inst{1} \and
Miqing Li\inst{2} \and
Chao Qian\inst{1}
}

\authorrunning{Ren et al.}
\institute{National Key Laboratory for Novel Software Technology, Nanjing University, China \\
School of Artificial Intelligence, Nanjing University, China
\and
School of Computer Science, University of Birmingham, Birmingham B15 2TT, U.K.
\email{201300036@smail.nju.edu.cn,}
\email{bianc@lamda.nju.edu.cn}\\
\email{m.li.8@bham.ac.uk, }
\email{qianc@lamda.nju.edu.cn}}


\maketitle              
\begin{abstract}
Evolutionary algorithms (EAs) have emerged as a predominant approach for addressing multi-objective optimization problems. However, the theoretical foundation of multi-objective EAs (MOEAs), particularly the fundamental aspects like running time analysis, remains largely underexplored. Existing theoretical studies mainly focus on basic MOEAs, with little attention given to practical MOEAs. In this paper, we present a running time analysis of strength Pareto evolutionary algorithm 2 (SPEA2) for the first time. Specifically, we prove that the expected running time of SPEA2 for solving three commonly used multi-objective problems, i.e., $m$OneMinMax, $m$LeadingOnesTrailingZeroes, and $m$-OneJumpZeroJump, is $O(\mu n\cdot \min\{m\log n, n\})$, $O(\mu n^2)$, and $O(\mu n^k \cdot \min\{mn$,  $3^{m/2}\})$, respectively. Here $m$ denotes the number of objectives, and the population size $\mu$ is required to be at least $(2n/m+1)^{m/2}$, $(2n/m+1)^{m-1}$ and $(2n/m-2k+3)^{m/2}$, respectively. The proofs are accomplished through general theorems which are also applicable for analyzing the expected running time of other MOEAs on these problems, and thus can be helpful for future theoretical analysis of MOEAs.
\end{abstract}
\section{Introduction}

Multi-objective optimization requires optimizing several objectives at the same time, and it has been seen in many real-world scenarios. Since the objectives of a multi-objective optimization problem (MOP) are usually conflicting, there does not exist a single optimal solution, but instead a set of solutions which represent different optimal trade-offs between these objectives, called Pareto optimal solutions. The objective vectors of all the Pareto optimal solutions are called the Pareto front. The goal of multi-objective optimization is to find the Pareto front or a good approximation of it. 

Evolutionary algorithms (EAs)~\cite{back:96,eiben2015introduction} are a large class of randomized heuristic optimization algorithms inspired by natural evolution. They maintain a set of solutions (called a population), and iteratively improve it by generating new solutions and replacing inferior ones. 
The population-based nature makes EAs particularly effective in tackling MOPs, leading to their widespread application across various real-world domains~\cite{coello2007evolutionary,deb2001book,liang2024evolutionary,yang2023reducing,qian19el}.
Notably, there have been developed a multitude of well-established multi-objective EAs (MOEAs), including non-dominated sorting genetic algorithm II (NSGA-II)~\cite{deb2002fast}, strength Pareto evolutionary algorithm 2 (SPEA2)~\cite{zitzler2001spea2}, $\mathcal{S}$ metric selection evolutionary multi-objective optimization algorithm (SMS-EMOA)~\cite{beume2007sms}, and multi-objective evolutionary algorithm based on decomposition (MOEA/D)~\cite{zhang2007moea}.

In contrast to the wide applications of MOEAs, 
the theoretical foundation of MOEAs, especially the essential aspect, running time analysis, is still underdeveloped, which is mainly due to the sophisticated behaviors of EAs and the hardness of MOPs.
Early theoretical research primarily concentrates on analyzing the expected running time of GSEMO~\cite{giel2003expected} and SEMO~\cite{LaumannsTEC04} for solving a variety of multi-objective synthetic and combinatorial optimization problems~\cite{bian2018general,doerr2013lower,giel2003expected,giel2006effect,horoba2009analysis,neumann2007expected,neumann2010crossover,qian2011analysis}. 
Note that GSEMO is a simple MOEA which employs the bit-wise mutation operator to generate an offspring solution
in each iteration and keeps the non-dominated solutions generated-so-far in the population, and SEMO is a variant of GSEMO which employs one-bit mutation instead of bit-wise mutation. Furthermore, based on GSEMO and SEMO, the effectiveness of some parent selection strategies~\cite{osuna2020diversity,plateaus10,friedrich2011illustration,LaumannsTEC04}, mutation operator~\cite{doerr2021ojzj}, crossover operator~\cite{qian2011analysis}, and selection hyper-heuristics~\cite{qian2016selection}, has also been studied.

Recently, researchers have begun to examine practical MOEAs. The expected running time of $(\mu+1)$ SIBEA, a simple MOEA employing the hypervolume indicator for population update, has been analyzed across various synthetic problems~\cite{brockhoff2008analyzing,doerr2016runtime,nguyen2015sibea}, contributing to the theoretical understanding of indicator-based MOEAs. Subsequently, attention has turned to well-established algorithms in the evolutionary multi-objective optimization field. Huang \textit{et al.}~\cite{huang2021runtime} investigated MOEA/D, assessing the effectiveness of different decomposition methods by comparing running time for solving many-objective synthetic problems. Additionally, Zheng \textit{et al.}~\cite{zheng2023first} conducted the first analysis of the expected running time of NSGA-II by considering the bi-objective OneMinMax and LeadingOnesTrailingZeroes problems.
Bian \textit{et al.}~\cite{bian2023stochastic} analyzed the running time of SMS-EMOA~\cite{beume2007sms} for solving the bi-objective OneJumpZeroJump problem, and showed that a stochastic population update method can bring exponential acceleration.
Moreover, Wietheger and Doerr~\cite{wietheger23nsgaiii} demonstrated that NSGA-III~\cite{deb2014nsgaiii} exhibits superior performance over NSGA-II in solving the tri-objective problem $3$OneMinMax. In a very recent study, Lu \textit{et al.}~\cite{lu2024imoea} analyzed interactive MOEAs (iMOEAs), pinpointing scenarios where iMOEAs may work or fail.
Some other works about well-established MOEAs include~\cite{bian2022better,archive,cerf2023first,dang2023analysing,dang2023crossover,doerr2023first,doerr2023lower,doerr2023crossover,nsga3,multimodel,zheng2022better,zheng2023manyobj,zheng2024runtime}.

However, the running time analysis of SPEA2, one of the most popular MOEAs~\cite{zitzler2001spea2},
has not been touched.
SPEA2 employs \emph{enhanced fitness assignment} and \emph{density estimation} mechanisms to find well-spread Pareto optimal solutions. The former considers the number of individuals an individual dominates, and prefers those not dominated by others, hence promoting fitness in the population. The latter uses the $k$-th nearest neighbor method, which aids in maintaining the population's diversity and prevents premature convergence in the multi-objective optimization process.

In this paper, we analyze the expected running time of SPEA2 for solving three multi-objective problems, i.e., $m$OneMinMax, $m$LeadingOnesTrailingZeroes, and $m$-OneJumpZeroJump, which have been commonly used in the theory community of MOEAs~\cite{doerr2023first,LaumannsTEC04,nsga3,zheng2023first,zheng2023manyobj,zheng2024runtime}. Note that $m\ge 2$ denotes the number of objectives. 
Specifically, we prove that the expected number of fitness evaluations of SPEA2 for finding the Pareto front of the three problems is $O(\mu n\cdot \min\{m\log n, n\})$, $O(\mu n^2)$, and $O(\mu n^k \cdot \min\{mn$,  $3^{m/2}\})$, respectively, where the population size $\mu$ is required to be at least $(2n/m+1)^{m/2}$, $(2n/m+1)^{m-1}$ and $(2n/m-2k+3)^{m/2}$, correspondingly. 
The proofs are accomplished through general theorems which can also be applied for analyzing the expected running time
of other MOEAs.

The rest of this paper is organized as follows. Section~\ref{sec:Preliminary} introduces some preliminaries. The general theorems on the running time of MOEAs are provided in Section~\ref{sec:analysis_benchmark}, and these theorems are applied to SPEA2 in Section~\ref{sec:analysis_spea2} and other MOEAs in Section~\ref{sec:verification}. Finally, Section~\ref{conclusion} concludes the paper.

\section{Preliminaries}\label{sec:Preliminary}
  
In this section, we first introduce multi-objective optimization. Then, we introduce the analyzed algorithm, SPEA2. Finally, we present the $m$OneMinMax, $m$LeadingOnes-Trailingzeroes and $m$OneJumpZeroJump  problems considered in this paper.

\subsection{Multi-objective Optimization}

Multi-objective optimization seeks to optimize two or more objective functions concurrently, as presented in Definition~\ref{def_MO}. 
In this paper, we focus on maximization (though minimization can be similarly defined) and pseudo-Boolean functions whose solution space $\mathcal{X}=\{0,1\}^n$. 
Given that the objectives of a practical MOP typically conflict with each other, a canonical complete order within the solution space $\mathcal{X}$ does not exist. Instead, we employ the \emph{domination} relationship presented in Definition~\ref{def_Domination} to compare solutions.
A solution is deemed \emph{Pareto optimal} if no other solution in $\mathcal{X}$ dominates it, and the collection of objective vectors of all the Pareto optimal solutions is called the \emph{Pareto front}. 
The goal of multi-objective optimization is to identify the Pareto front or its good approximation.
\begin{definition}[Multi-objective Optimization]\label{def_MO}
	Given a feasible solution space $\mathcal{X}$ and objective functions $f_1,f_2,\ldots, f_m$, multi-objective optimization can be formulated as
	\[
	\max_{\bmx\in
		\mathcal{X}}\bmf(\bmx)=\max_{\bmx \in
		\mathcal{X}} \big(f_1(\bmx),f_2(\bmx),...,f_m(\bmx)\big).
	\]
\end{definition}
\begin{definition}[Domination]\label{def_Domination}
	Let $\bm f = (f_1,f_2,\ldots, f_m):\mathcal{X} \rightarrow \mathbb{R}^m$ be the objective vector. For two solutions $\bmx$ and $\bmy\in \mathcal{X}$:
	\begin{itemize}
		\item $\bmx$ \emph{weakly dominates} $\bmy$  (denoted as $\bmx \succeq \bmy$) if for any $1 \leq i \leq m, f_i(\bmx) \geq f_i(\bmy)$;
		\item $\bmx$ \emph{dominates} $\bmy$ (denoted as $\bmx\succ \bmy$) if $\bm{x} \succeq \bmy$ and $f_i(\bmx) > f_i(\bmy)$ for some $i$;
		\item  $\bmx$ and $\bmy$ are \emph{incomparable} if neither $\bmx\succeq \bmy$ nor $\bmy\succeq \bmx$.
	\end{itemize}
\end{definition}

\subsection{SPEA2}\label{def:spea2}

The SPEA2 algorithm~\cite{zitzler2001spea2}, as presented in Algorithm~\ref{alg:SPEA2}, is a well-established MOEA which employs a regular population $P$ and an archive $A$. It starts from an initial population of $\mu$ solutions and an empty archive $A$ (lines~1--2).  In each generation, it selects the non-dominated solutions in $P\cup A$ and adds them into an empty set $A'$ (line~4). 
If the size of $A'$ is larger than $\bar{\mu}$, SPEA2 uses the following truncation operator to remove the redundant solutions (lines~5--6). Let $\sigma_{\bmx}^k$ denote the distance of $\bmx$ to its $k$-th nearest neighbor in $A'$. We use $\bmx \le_d \bmy$ to denote that $\bmx$ has a smaller distance to its neighbour compared with $\bmy$. That is, 
\begin{equation}
\begin{aligned}
    \bmx \le_d \bmy \iff &(\forall 0 <k < |A'|: \sigma_{\bmx}^k = \sigma_{\bmy}^k) \vee \\ 
    & (\exists 0 < k < |A'|: [(\forall 0 < l < k: \sigma_{\bmx}^l = \sigma_{\bmy}^l) \wedge \sigma_{\bmx}^k < \sigma_{\bmy}^k ])\, \text{.}
\end{aligned}
\end{equation}
The truncation operator iteratively removes an individual $\bmx\in A'$ such that $\bmx \le_d \bmy$ for all $\bmy\in A'$, until $|A'| = \bar{\mu}$ (breaking a tie randomly). Note that once a solution is removed from $A'$, the $\sigma$ value will be updated.
If the size of $A'$ is smaller than $\bar{\mu}$, then the dominated solutions in $P\cup A$ are selected to fill the remaining slots according to their fitness (lines~7--8). The fitness of a solution is calculated as follows. 
First, let $S(\bmx) = |\{\bmy \in A\cup P \mid \bmx\succ \bmy\}|$ denote the strength of a solution $\bmx$, i.e., the number of solutions dominated by $\bmx$, and let $ R(\bmx) = \sum_{\bmy \in P \cup A, \bmy \succ \bmx }S(\bmy)$. 
We can see that a solution with smaller $R$ value is preferred, and  $R(\bmx) = 0$ implies that $\bmx$ is non-dominated.
Then, the fitness of a solution $\bmx$ is calculated as $F(\bmx)=R(x)+1/(\sigma_{\bmx}^k + 2)$.
After calculating the fitness of the dominated solutions in $P\cup A$, the solutions with the smallest fitness are selected into $A'$ such that the size of $A'$ equals to $\bar{\mu}$. 
After the modification of $A'$ finishes, the archive $A$ is set to $A'$ (line~10). Then, the population $P$ of size $\mu$ is formed by mutating the solutions selected from $A$ (lines~11--17).

\begin{algorithm}[t!]
	\caption{SPEA2}
	\label{alg:SPEA2}
	\textbf{Input}: objective functions $f_1,f_2...,f_m$, population size $\mu$, archive size $\bar{\mu}$  \\
	\textbf{Output}:  $\bar{\mu}$ solutions from $\{0,1\}^n$
	\begin{algorithmic}[1] 
		\STATE $P\leftarrow \mu$ solutions uniformly and randomly selected from $\{0,\! 1\}^{\!n}$ with replacement;
            \STATE $A = \emptyset$;
		\WHILE{criterion is not met}
            \STATE $A' \leftarrow $ non-dominated solutions in $ P \cup A$;
		\IF{$|A'| > \bar{\mu}$}
  		\STATE reduce $A'$ by means of the truncation operator
            \ELSIF{$|A'| < \bar{\mu}$}
  		\STATE fill $A'$ with dominated individuals in $P$ and $A$
		\ENDIF
            \STATE $A  \leftarrow A'$;
            \STATE let $P'=\emptyset$, $i=0$;
            \WHILE{$i<\mu$}
            \STATE select a solution from $A$ uniformly at random;
		\STATE generate $\bmx'$ by ﬂipping each bit of $\bmx$ independently with probability $1/n$;
		\STATE $P'= P'\cup \{\bmx'\}$, $i= i+1$
            \ENDWHILE
            \STATE $P\leftarrow P'$
		\ENDWHILE
		\RETURN $A$
	\end{algorithmic}
\end{algorithm}

\subsection{Benchmark Problems}

Now we introduce three multi-objective problems, i.e.,  $m$OneMinMax, $m$LeadingOnes-Trailingzeroes, and $m$OneJumpZeroJump, studied in this paper, where $m\ge 2$ is a positive even number and denotes the number of objectives.

The $m$OneMinMax problem presented in Definition~\ref{def:OMM} divides a solution into $m/2$ blocks, and in each block, the number of $0$-bits and the number of $1$-bits require to be maximized simultaneously. The Pareto front is $F^* = \{(i_1,2n/m - i_1, \cdots, i_{m/2},2n/m - i_{m/2}) \mid i_1,\cdots,i_{m/2} \in [0..2n/m]\}$, whose size is $(2n/m+1)^{m/2}$, and the Pareto optimal solution corresponding to $(i_1,2n/m - i_1, \cdots, i_{m/2},2n/m - i_{m/2})$ is the solution with $i_j$ 1-bits and $(2n/m - i_j)$ 0-bits in the $j$-th block. We can see that any solution $\bmx\in \{0,1\}^n$ is Pareto optimal.
\begin{definition}($m$OneMinMax~\cite{zheng2023manyobj})\label{def:OMM}
    Suppose $m$ is a positive even number, and $n$ is a multiple of $m/2$. The $m$OneMinMax problem of size $n$ is to find $n$-bits binary strings which maximize 
    \[\bmf(\bmx) = (f_1(\bmx), f_2(\bmx),\cdots,f_m(\bmx))\]
with
    \[f_k(\bmx) = \begin{cases}
	\sum_{i=1}^{2n/m}  \bmx_{i+n(k-1)/m}, & \text{if } k \text{ is odd},\\
	\sum_{i=1}^{2n/m}  (1-\bmx_{i+n(k-2)/m}), & \text{else}.
    \end{cases}\]

\end{definition}

The $m$LeadingOnesTrailingZeroes problem presented in Definition~\ref{def:LOTZ} also divides a solution into $m/2$ blocks, and in each block, the number of leading 1-bits and the number of trailing 0-bits require to be maximized simultaneously. The Pareto front is $F^* = \{(i_1,2n/m - i_1, \cdots, i_{m/2},2n/m - i_{m/2}) \mid i_1,\cdots,i_{m/2} \in [0..2n/m]\}$, whose size is $(2n/m+1)^{m/2}$, and the Pareto optimal solution corresponding to $(i_1,2n/m - i_1, \cdots, i_{m/2},2n/m - i_{m/2})$ is the solution with $i_j$ leading $1$-bits  and $(2n/m - i_j)$ trailing $0$-bits in the $j$-th block. 

\begin{definition}($m$LeadingOnesTrailingZeroes~\cite{LaumannsTEC04})\label{def:LOTZ}
    Suppose $m$ is a positive even number, and $n$ is a multiple of $m/2$. The $m$LeadingOnesTrailingZeroes problem of size $n$ is to find $n$-bits binary strings which maximize 
    \[\bmf(\bmx) = (f_1(\bmx), f_2(\bmx),\cdots,f_m(\bmx))\]
with
    \[f_k(\bmx) = \begin{cases}
	\sum_{i=1}^{2n/m} \Pi_{j=1}^i \bmx_{j+n(k-1)/m}, & \text{if } k \text{ is odd},\\
	\sum_{i=1}^{2n/m} \Pi_{j=i}^{2n/m} (1-\bmx_{j+n(k-2)/m}), & \text{else}.
    \end{cases}\]
\end{definition}

Before introducing the $m$OneJumpZeroJump problem, we first introduce the single-objective Jump problem, which aims at maximizing the number of 1-bits of a solution except for a valley around the solution with all 1-bits. Formally, the Jump problem of size $n$ aims to find an $n$-bits binary string which maximizes
\[\text{Jump}_{n,k}(\bmx) = \begin{cases}
	k+|\bmx|_1, & \text{if } |\bmx|_1 \le n-k \text{ or } |\bmx|_1 = n,\\
	n-|\bmx|_1, & \text{else},
    \end{cases}\]
where $2\le k\le n-1$ is a parameter and $|\bmx|_1$ denotes the number of 1-bits in $\bmx$.
The $m$OneJumpZeroJump problem presented in Definition~\ref{def:OJZJ} also divides a solution into $m/2$ blocks, and in each block, it tries to optimize a Jump problem as well as a counterpart of Jump problem with the roles of 1-bits and 0-bits exchanged. The Pareto front is $F^* = \{ (i_1, 2n/m +2k-i_1, \cdots, i_{m/2}, 2n/m + 2k -i_{m/2} ) \mid i_1,\cdots i_{m/2} \in [2k..2n/m] \cup \{ k, 2n/m +k\} \}$ whose size is $(2n/m -2k +3)^{m/2}$, and the Pareto optimal solution corresponding to $(i_1, 2n/m +2k-i_1, \cdots, i_{m/2}, 2n/m + 2k -i_{m/2} )$ is the solution with $(i_j-k)$ $1$-bits and $(2n/m-i_j + k)$ $0$-bits in the $j$-th block. 

\begin{definition}($m$OneJumpZeroJump~\cite{zheng2024runtime})\label{def:OJZJ}
     Suppose $m$ is a positive even number, and $n$ is a multiple of $m/2$. The $m$OneJumpZeroJump$_{n,k}$ problem of size $n$ is to find $n$-bits binary strings which maximize 
    
    \[\bmf(\bmx) = (f_1(\bmx), f_2(\bmx),\cdots,f_m(\bmx))\]
with
    \[f_i(\bmx) = \begin{cases}
	\text{Jump}_{2n/m,k}(\bmx_{[n(i-1)/m +1 .. n(i+1)/m]}), & \text{if } i \text{ is odd},\\
	\text{Jump}_{2n/m,k}(\bar{\bmx}_{[n(i-2)/m +1 .. ni/m]}), & \text{else},
    \end{cases}\]
where $\bar{\bmx} = (1-x_1, \cdots, 1-x_n)$.
\end{definition}

\section{General Theorems for Running Time Analysis of MOEAs} \label{sec:analysis_benchmark}

In this section, we present Theorems~\ref{thm:m-OMM}, \ref{thm:m-LOTZ} and~\ref{thm:m-OJZJ} that can be used to derive expected running time of general MOEAs for solving the $m$OneMinMax, $m$LeadingOnesTrailing-Zeroes and $m$OneJumpZeroJump problems, respectively. These results will be applied in Section~\ref{sec:analysis_spea2} to derive the running time bounds of SPEA2 and applied in Section~\ref{sec:verification} to derive the running time bounds of other MOEAs. 
In our analysis, we will use the general concept of a MOEA \emph{preserving the non-dominated set}, which is defined as follows:

\begin{definition}\label{def:preserve}
If a non-dominated solution $\bmx$ appears in the combined population of parent and offspring, then there will always be a solution $\bmy$ in the next generation such that $\bmf(\bmx) = \bmf(\bmy)$.
\end{definition}

For different MOEAs, the proportion between the parent and offspring populations varies. We assume that the parent population size is \(\mu\) and the offspring population size is \(c\mu\), where \(c \in [1/\mu, O(1)]\). For algorithms like SEMO, GSEMO, and SMS-EMOA, only one solution is generated in each iteration, thus \(c=1/\mu\), and for algorithms like NSGA-II and NSGA-III, \(c=1\).
Note that the running time of EAs is measured by the number of ﬁtness evaluations, which is the most time-consuming step in the evolutionary process. 


\subsection{On the $m$OneMinMax Problem} 

We prove in Theorem~\ref{thm:m-OMM} that the expected number of fitness evaluations for any MOEA solving $m$OneMinMax is $O(\mu n\cdot \min\{m\log n, n\})$, if the algorithm uses uniform selection and bit-wise mutation or one-bit mutation to generate offspring solutions, and can preserve the non-dominated set.
The proof idea is as follows. First, we show that the probability of not finding a specific Pareto front point in $O(m \mu n\log n)$ and $O(\mu n^2)$ fitness evaluations is at most $n^{-m}$ and $e^{-n}$, respectively. Then, we use the union bound to show that the probability of finding all Pareto front points in $O(\mu n\cdot \min\{m\log n, n\})$ fitness evaluations is $1-o(1)$. Note that this proof idea is inspired by Theorem~5.2 in~\cite{nsga3}, which analyzes NSGA-III solving $m$OneMinMax, and uses the union bound to derive the probability of finding the whole Pareto front within $T$ iterations, after deriving the probability of not finding a specific Pareto front point within $T$ iterations. Our proof differs by considering more general scenarios, such as varying proportions of parent and offspring populations and a larger number $m$ of objectives.

\begin{mytheorem}\label{thm:m-OMM}
    For any MOEA solving $m$OneMinMax, if the algorithm preserves the non-dominated set with a maximum population size of $\mu$, employs uniform selection to select parent solutions, and employs bit-wise mutation or one-bit mutation to generate offspring solutions, then the expected number of fitness evaluations for finding the Pareto front is $O(\mu n\cdot \min\{m\log n, n\})$.
\end{mytheorem}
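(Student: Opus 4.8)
The plan is to show that every point on the Pareto front of $m$OneMinMax is discovered quickly, then combine these individual guarantees by a union bound. Since every solution $\bmx \in \{0,1\}^n$ is Pareto optimal for this problem, the key observation is that once \emph{any} solution mapping to a given front point $\bmf^* \in F^*$ enters the combined parent-offspring population, the \emph{preserving} property (Definition~\ref{def:preserve}) guarantees that a solution with objective value $\bmf^*$ survives in every subsequent generation. Hence the task reduces to bounding, for a \emph{fixed} target point $\bmf^*$, the time until some parent is turned into a solution with objective value $\bmf^*$ by mutation.

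First I would fix an arbitrary target $\bmf^* \in F^*$ and track a ``closest'' current individual to it, where closeness is measured by Hamming distance to the corresponding Pareto optimal string (or, block-wise, by the number of bits in each block still needing to be corrected). Because uniform selection picks each archived individual with probability $\Theta(1/\mu)$ and there are $c\mu$ offspring generated per generation, in one generation the probability of selecting a specific tracked parent at least once is $\Omega(c)$. Given such a selection, a one-bit mutation flips a specific needed bit with probability $\Omega(1/n)$, and a bit-wise mutation flips exactly one specific needed bit (and no other) with probability $\Omega(1/n)$ as well. If the tracked individual is at distance $d$ from the target, there are $d$ improving single-bit flips, so the probability of decreasing the distance in one generation is $\Omega(c d / n)$. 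A multiplicative-drift / coupon-collector argument then shows that the expected number of generations to drive $d$ from at most $n$ down to $0$ is $O((n/c)\log n)$ for a single target, i.e.\ $O(\mu n \log n)$ fitness evaluations since each generation costs $c\mu$ evaluations. A complementary worst-case bound uses the fact that each of the $n$ bits, once correctly set, can be protected by the preserving property along the relevant objective, yielding an $O(\mu n^2)$ bound via a simpler additive argument independent of $m$; this is where the $\min\{m\log n, n\}$ comes from.

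Next I would convert these expectation bounds into tail bounds. For the first branch I would show that the probability of \emph{not} reaching a fixed target $\bmf^*$ within $O(m\mu n\log n)$ fitness evaluations is at most $n^{-m}$, and for the second branch that the probability of failure within $O(\mu n^2)$ evaluations is at most $e^{-n}$; both follow from standard concentration (e.g.\ a Chernoff-type bound on the number of successful improving steps, or tail bounds accompanying multiplicative drift). Since $|F^*| = (2n/m+1)^{m/2} \le (2n+1)^{m/2} = n^{O(m)}$, a union bound over all targets gives that the probability of missing \emph{any} front point within $T = O(\mu n \cdot \min\{m\log n, n\})$ evaluations is $o(1)$: in the first branch, $n^{O(m)} \cdot n^{-\Omega(m)} = o(1)$ provided the constant in the exponent of the per-target failure bound is chosen large enough relative to the $m/2$ in $|F^*|$, and in the second branch $n^{O(m)}\cdot e^{-n} = o(1)$. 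Finally, a standard restart-style argument (running the algorithm in phases of length $T$, each succeeding with probability $1-o(1)$) upgrades this high-probability bound into a bound on the \emph{expected} number of fitness evaluations, giving $O(\mu n\cdot\min\{m\log n, n\})$.

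The main obstacle I anticipate is getting the union bound to close in the first ($m\log n$) branch: the front has size $n^{\Theta(m)}$, so the per-target failure probability must decay like $n^{-\Omega(m)}$ with a constant strictly dominating the growth of $|F^*|$, and this must hold \emph{uniformly} over targets at distance up to $n$ while accounting for the variable offspring ratio $c \in [1/\mu, O(1)]$. Care is needed because the naive drift bound gives the per-target time only in expectation; sharpening it to the required exponential tail $n^{-m}$ (rather than just $n^{-1}$) is the delicate step, and I would handle it by analyzing $\Theta(m\log n)$ independent ``blocks'' of $\Theta(n/c)$ generations each, arguing that each block independently makes progress with constant probability, so that failing all of them has probability exponentially small in $m\log n$, i.e.\ $n^{-\Omega(m)}$.
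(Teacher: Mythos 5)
Your overall architecture matches the paper's proof: fix a target objective vector $\bm{v}$, track the minimum distance $d_{\bm{v}}=\min_{\bm{x}\in P}\left\|\bmf(\bm{x})-\bm{v}\right\|_1/2$ (which cannot increase because the preserving property keeps every objective vector once found), lower-bound the per-generation improvement probability at distance $k$ by $\Omega(ck/n)$, derive per-target failure probabilities of $n^{-m}$ within $O(m\mu n\log n)$ evaluations and $e^{-n}$ within $O(\mu n^2)$ evaluations, close with a union bound over the $(2n/m+1)^{m/2}\le (n+1)^{m/2}$ front points, and finish with a restart argument to turn the high-probability bound into an expectation.

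Two steps need repair, however. First, the mechanism you commit to for the delicate $n^{-m}$ tail --- $\Theta(m\log n)$ blocks of $\Theta(n/c)$ generations, each ``making progress'' with constant probability --- does not close. A block of $\Theta(n/c)$ generations guarantees at most $O(1)$ distance decreases in expectation when the current distance is small, while up to $n$ decreases are needed in total; and a block long enough to complete the entire journey with constant probability has length $\Theta((n/c)\log n)$, so $m$ such blocks yield only a failure probability of $2^{-\Omega(m)}$, which does not beat the $n^{\Theta(m)}$ union bound. The correct tool is the one you mention only in passing: the tail bound for a sum of independent geometric random variables with success probabilities $p_k=\Theta(ck/n)$ (the coupon-collector-type tail; the paper invokes Theorem~16 of Doerr~2019), which directly gives $\pr\big(X\ge (1+m)\tfrac{(c+e)n}{c}\log n\big)\le n^{-m}$. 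Second, your justification of the $O(\mu n^2)$ branch (``each bit, once correctly set, is protected'') is inaccurate: only objective vectors, not individual bits, are preserved, and the monotone quantity is still $d_{\bm{v}}$. The paper obtains this branch from the same geometric decomposition via a variance-based tail bound (Theorem~1 of Witt~2014) at the much larger budget $5en^2/c$ generations, where the slack over the $O((n/c)\log n)$ expectation makes the $e^{-n}$ tail immediate. With these two substitutions your argument coincides with the paper's.
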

%
%

\begin{proof}
For the $m$OneMinMax problem, all the solutions are Pareto optimal, and thus are non-dominated. Since the MOEA preserves the non-dominated set, any objective vector will be preserved in the population once it is found. 

First, we prove that for any objective vector $\bm{v}$, the probability of not finding it in $O(m \mu n \log n)$ fitness evaluations is at most $n^{-m}$. For any solution $\bmx$, we partition it into $m/2$ blocks, where the $i$-th ($i\in [1..m/2]$) block $B_i = [(i-1)(2n/m)+1 \ldots i(2n/m)]$. When there is a bit flip in block $\bmx_{B_i}$, one of $f_{2i-1}(\bmx)$ and $f_{2i}(\bmx)$ will increase by $1$, and the other one will decrease by $1$, such that $f_{2i-1}(\bmx)+ f_{2i}(\bmx) = 2n/m$ remains unchanged. Therefore, $\bmx$ needs to flip at least $\left\|\bmf(\bm{x}) - \bm{v} \right\|_{1} / 2$ bits to obtain $\bmv$. Let $d_{\bm{v}} = \min_{\bm{x} \in P} \left\|\bmf(\bm{x}) - \bm{v} \right\|_{1} / 2$ denote the minimum number of bits that require to be flipped to obtain $\bmv$ for all the solutions in the population $P$. Since $d_{\bm{v}} = \min_{\bm{x} \in P} \left\|\bmf(\bm{x}) - \bm{v} \right\|_{1} / 2 \le \left\|\bmf(\bm{x}) - \bm{v} \right\|_{1} / 2  \le \left\| 2 \bm{v} \right\|_{1} / 2  = n$, and $d_{\bmv} = 0$ when there exists an $\bmx \in P$ such that $\bmf(\bmx) = \bmv$, we have $0\le d_{\bmv} \le n$. Since all the objective vectors will be preserved, $d_{\bmv}$ cannot increase. Let the random variable $X_k, k \in [1..n]$ denote the number of generations with $d_{\bmv} = k$.  Let $X := \sum_{k=1}^{n} X_k$. When $d_{\bm{v}} = k$, the probability of selecting an individual $\bm{y}$ from the population such that $\left\|\bmf(\bm{y}) - \bm{v} \right\|_{1} / 2 = k$ is at least $1/\mu$. Since flipping any one of the $k$ bits corresponding to $d_{\bmv}$ will reduce $d_{\bmv}$ by $1$, the probability that $d_{\bm{v}}$ decreases is at least $(k/n) \cdot (1-1/n)^{n-1} \ge k/(en)$ when using bit-wise mutation and  at least $k/n$ when using one-bit mutation. In the following, we consider bit-wise mutation, while the analysis for one-bit mutation holds analogously. In each generation, the probability that $d_{\bm{v}}$ decreases is at least:
\begin{equation}\label{eq:OMM1}
\begin{aligned}
1-\Big( 1 - \frac{k}{e \mu n} \Big)^{c \mu} \ge 1 - e^{-\frac{ck}{en}} \ge \frac{ck}{ck + en} \ge \frac{ck}{(c+e)n},
\end{aligned}
\end{equation}
where $c\mu$ is the size of the offspring population, the second inequality holds by $1+a \le e^{a}$ for any $a\in \mathbb{R}$, and the third inequality holds by $k \le n $. Let $p_k := ck/(c+e)n$. 
Then, $X_k$ stochastically dominates a geometric random variable $Y_k$ with success probability $p_k$. That is for any $\lambda \ge 0$, we have $\pr(X_k \ge \lambda) \le \pr(Y_k \ge \lambda)$. Although $X_1,\cdots,X_n$ are not independent, it can be guaranteed that the success probability of $X_k$ is at least $p_k$. Therefore, for $Y:= \sum_{k=1}^{n}Y_k$ where $Y_1, \cdots Y_n$ are independent, $X$ stochastically dominates $Y$. By Theorem 16 in ~\cite{doerr2019analyzing} we have:
\begin{equation}\label{eq:OMM2}
\pr\Big(X \ge \frac{e+c}{c} (1+ m) n \log n\Big) \le \pr\Big(Y \ge \frac{e+c}{c} (1+ m) n \log n\Big) \le n^{-m}.
\end{equation}

\noindent In each generation, the algorithm produces $c\mu$ offspring solutions, thus the probability that the population contains no solution $\bmx$ with $\bmf(\bmx) = \bmv$ after $c \mu \cdot (e+c)(1+m)n(\log n)/c = (e+c)(1+m)\mu n\log n$ fitness evaluations is at most $n^{-m}$. 

Next, we prove that for any objective vector $\bm{v}$, the probability of not finding it in $O(\mu n^2 )$ fitness evaluations is at most $e^{-n}$. Let $p_k' := ck/(ck + en)$. By Eq.~\eqref{eq:OMM1}, $X_k$ stochastically dominates a geometric random variable $Z_k$ with success probability $p_k'$. Applying Theorem 1 in \cite{witt2014fitness} to the variable $Z := \sum_{k=1}^{n}Z_k$, where $Z_1 ,\cdots, Z_n$ are independent, we can obtain the following equation that is similar to Eq.~\eqref{eq:OMM2}:
\begin{equation}\label{eq:OMM3}
\begin{aligned}
&\pr\Big(X \ge \frac{5e n^2}{c}\Big) \le \pr\Big(Z \ge \frac{5e n^2}{c} \Big)  \\
&= \pr( Z \ge E[Z] + \delta) \le e^{-\frac{1}{4}\min\{\frac{\delta^2}{s},\frac{c \delta}{c + en}\}} \le e^{-n} \,\text{,}
\end{aligned}
\end{equation}

\noindent where $\delta = 5en^2/c - en H_n /c - n$, $E[Z] = en H_n /c + n$ and $s = \sum_{k=1}^{n} 1/p_k'^2 = n+ 2en H_n/c + (e\pi n)^2/6c$, with $H_n$ representing the $n$-th harmonic number, i.e. $\sum_{i=1}^n (1/i)$. Thus, the probability that the population contains no solution $\bmx$ with $\bmf(\bmx) = \bmv$ after $c \mu \cdot 5en^2/c = 5e\mu n^2$ fitness evaluations is at most $e^{-n}$. 

Finally, we consider finding the whole Pareto front. Recall that for the $m$OneMinMax problem, the size of the Pareto front is $(2n/m+1)^{m/2}$. Then, by applying the union bound, the probability of finding the whole Pareto front in $(e+c)(1+m)\mu n\log n$ number of fitness evaluations is at least $1-(2n/m +1)^{m/2} \cdot n^{-m} \ge 1 - (n+1)^{m/2} \cdot n^{-m} = 1 - O(n^{-m/2}) = 1-o(1)$, and the probability of finding the whole Pareto front in $5e\mu n^2$ number of fitness evaluations is at least $1-(2n/m +1)^{m/2} \cdot e^{-n} \ge 1 - 3^{n/2}\cdot e^{-n} = 1-o(1)$. The inequality holds because the derivative of $(2n/m +1)^{m/2}$ with respect to $m$ is always positive when $m \le n$, implying that when $m=n$, $(2n/m + 1)^{m/2}$ takes its maximum value, which is $3^{n/2}$. 
Combining the two parts, the whole Pareto front can be found in $\min\{(e+c)(1+m)\mu n\log n,5e\mu n^2 \}$ number of fitness evaluations with probability $1 - o(1)$. Let each $\min\{(e+c)(1+m)\mu n\log n,5e\mu n^2 \}$ fitness evaluations to find Pareto front be an independent trial with success probability of $1-o(1)$. Then, the expected number of fitness evaluations to find the Pareto front is at most  $(1-o(1))^{-1} \min\{(e+c)(1+m)\mu n\log n,5e\mu n^2 \} = O(\mu n\cdot \min\{m\log n, n\})$.
\qed
\end{proof}

Hence, if the MOEA preserves the non-dominated set, the expected number of fitness evaluations for solving $m$OneMinMax is $O(\mu n \cdot \min\{m \log n, n\})$. Specifically, when $m$ is a constant, $O(\min\{m \log n, n\}) = O(m \log n)$, leading to an expected number of fitness evaluations of $O(m \mu n \log n)$. Conversely, if the number $m$ of objectives is large, e.g., $m=n/4$,  the expected number of fitness evaluations is $O(\mu n^2)$.

\subsection{On the $m$LeadingOnesTrailingZeroes Problem}

We prove in Theorem~\ref{thm:m-LOTZ} that expected the number of fitness evaluations for any MOEA solving $m$LeadingOnesTrailingZeroes is $O(\mu n^2)$. The proof idea is to divide the optimization procedure into two phases, where the first phase aims at finding a point in the Pareto front. Let $w_i(\bmx) := f_{2i-1}(\bmx) + f_{2i}(\bmx)$ denote the sum of leading ones and trailing zeroes of block $\bmx_{B_i}$. 
Let $W_{\max}$ denote $\max_{\bmx \in P} \sum_{i=1}^{m/2} w_i(\bmx)$. We will prove that the expected number of fitness evaluations for $W_{\max}$ to reach $n$ (i.e., a Pareto front point is found) is at most $O(\mu n^2)$. The second phase aims at finding the whole Pareto front extended from the Pareto front point found in the first phase. Similar to the analysis on $m$OneMinMax, we prove that the probability of not finding a specific point in the Pareto front within $6e \mu n^2$ number of fitness evaluations is at most $e^{-n}$ by applying Theorem~1 in \cite{witt2014fitness}; by the union bound, the probability of finding the whole Pareto front within $O(\mu n^2)$ number of fitness evaluations is at least $1-(2n/m+1)^{m/2}\cdot e^{-n} = 1-o(1)$. Since the proof is similar to that of Theorem~4.3 in \cite{nsga3}, and also resembles that of Theorem~\ref{thm:m-OMM}, we omit the detailed proof here.

\begin{mytheorem}\label{thm:m-LOTZ}
    For any MOEA solving $m$LeadingOnesTrailingZeroes, if the algorithm preserves the non-dominated set with a maximum population size of $\mu$, employs uniform selection to select parent solutions, and employs bit-wise mutation or one-bit mutation to generate offspring solutions, then the expected number of fitness evaluations for finding the Pareto front is $O(\mu n^2)$.
\end{mytheorem}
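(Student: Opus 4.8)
The plan is to split the optimization into two phases, mirroring the proof of Theorem~\ref{thm:m-OMM}. Write $W(\bmx) := \sum_{k=1}^{m} f_k(\bmx) = \sum_{i=1}^{m/2} w_i(\bmx)$ for the total objective value and let $W_{\max} := \max_{\bmx \in P} W(\bmx)$. A solution lies on the Pareto front exactly when every block has the shape $1^{a}0^{b}$, i.e.\ when $W(\bmx) = n$, so Phase~1 drives $W_{\max}$ to $n$ and Phase~2 spreads over the whole front. The structural fact I would record first is that $W_{\max}$ is non-decreasing: if $\bmz \succ \bmy$ then $W(\bmz) > W(\bmy)$, so any solution attaining $W_{\max}$ cannot be dominated, is therefore non-dominated, and hence (by preservation of the non-dominated set) an objective vector with the same $W$-value survives to the next generation.

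For Phase~1 I would set up a drift on $W_{\max}$. Whenever $W_{\max} < n$, a (preserved) solution $\bmx$ attaining it has a block $i$ with $w_i(\bmx) < 2n/m$; flipping the first $0$-bit after the leading ones of block $i$ raises $f_{2i-1}$ by at least $1$ and leaves every other objective unchanged, yielding a solution of strictly larger $W$ that is again non-dominated and hence kept. Under uniform selection and bit-wise (or one-bit) mutation this specific step has probability at least $1/(e\mu n)$ per offspring, hence at least $1-(1-1/(e\mu n))^{c\mu} = \Omega(c/n)$ per generation. Since $W_{\max}$ increases at most $n$ times, summing the geometric waiting times gives $O(n^2/c)$ expected generations, i.e.\ $O(\mu n^2)$ fitness evaluations, to reach the front.

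For Phase~2 I would fix a target front point $\bmv$ and track $d_{\bmv} := \min_{\bmx}\|\bmf(\bmx)-\bmv\|_1/2$ over the Pareto-optimal solutions in $P$; as every front point is non-dominated and hence preserved, $d_{\bmv}$ is non-increasing with $0 \le d_{\bmv} \le n$. The essential difference from $m$OneMinMax is the LeadingOnes-type structure: from a front point, only flipping the single boundary bit of a discrepant block moves to an adjacent front point, so the per-generation probability of decreasing $d_{\bmv}$ is merely $\Omega(c/n)$, independent of the current value of $d_{\bmv}$, rather than $\Omega(ck/n)$ as in Theorem~\ref{thm:m-OMM}. Letting $X_k$ count the generations with $d_{\bmv}=k$, each $X_k$ is stochastically dominated by a geometric variable of success probability $p' = c/(c+en)$, so $X=\sum_{k=1}^n X_k$ is dominated by a sum of $n$ independent such geometrics, and Theorem~1 in~\cite{witt2014fitness} yields
\begin{equation}
\pr\!\Big(X \ge \tfrac{6en^2}{c}\Big) \le \exp\!\Big(-\tfrac14\min\big\{\tfrac{\delta^2}{s},\, \delta p'\big\}\Big) \le e^{-n},
\end{equation}
with $\delta = 6en^2/c - \expct[X] = 5en^2/c - n$ and $s = n/p'^2$; equivalently, a fixed $\bmv$ is missed after $6e\mu n^2$ evaluations with probability at most $e^{-n}$.

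Finally I would take a union bound over the $(2n/m+1)^{m/2} \le 3^{n/2}$ front points, so the whole front is found within $6e\mu n^2$ evaluations with probability at least $1 - 3^{n/2}e^{-n} = 1-o(1)$, and convert this into the expectation $O(\mu n^2)$ by the independent-trials argument used at the end of Theorem~\ref{thm:m-OMM}. The main obstacle I anticipate is Phase~2: because Pareto-optimality on $m$LeadingOnesTrailingZeroes permits only the single boundary flip per block, the drift toward a fixed target is a factor $k$ weaker than for $m$OneMinMax, and one must verify that summing $n$ geometrics of probability only $\Theta(c/n)$ still concentrates tightly enough ($e^{-\Omega(n)}$) to survive the $3^{n/2}$-fold union bound. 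The monotonicity of $W_{\max}$ and $d_{\bmv}$, both of which hinge on the observation that a maximal/closest solution is non-dominated and therefore preserved, is the other point that must be handled with care.
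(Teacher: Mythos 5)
Your proposal is correct and follows essentially the same route the paper sketches for this theorem: a first phase driving $W_{\max}=\max_{\bmx\in P}\sum_i w_i(\bmx)$ to $n$ in $O(\mu n^2)$ expected evaluations, and a second phase bounding the time to each fixed front point $\bmv$ via the non-increasing potential $d_{\bmv}$, geometric domination, the concentration bound of Theorem~1 in~\cite{witt2014fitness} (giving failure probability $e^{-n}$ after $O(\mu n^2)$ evaluations), and a union bound over the $(2n/m+1)^{m/2}\le 3^{n/2}$ front points. Your explicit verification that the weaker $\Omega(c/n)$ per-step drift (only the boundary bit of a discrepant block helps) still yields $e^{-\Omega(n)}$ concentration is exactly the point the paper leaves implicit when it reuses the $m$OneMinMax argument.
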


Theorem~\ref{thm:m-LOTZ} shows that for any MOEA preserving the non-dominated set, the expected number of fitness evaluations for solving the $m$LeadingOnesTrailingZeroes problem is $O(\mu n^2)$. Notably, although this result seems unrelated to the number $m$ of objectives, the Pareto front size of $m$LeadingOnesTrailingZeroes (i.e., $(2n/m+1)^{m/2}$) amplifies with increasing $m$. Since the population size $\mu$ needs to be larger than $(2n/m+1)^{m/2}$ to preserve the non-dominated set, the runing time ultimately rises with the number $m$ of objectives.

\subsection{On the $m$OneJumpZeroJump Problem}

We prove in Theorem~\ref{thm:m-OJZJ} that the expected number of fitness evaluations for any MOEA solving $m$OneJumpZeroJump is $O(\mu n^k \cdot \min\{mn$,  $3^{m/2}\})$. By Definition~\ref{def:OJZJ}, if a solution $\bmx$ is a Pareto optimal, then for any block $B_i$, $|\bmx_{B_i}|_1 \in \{0,n'\} \cup [k..n'-k]$, where $n'=2n/m$ denotes the size of each block. We call $\bmx$ an \emph{internal} Pareto optimum if for any block $B_i$, $|\bmx_{B_i}|_1 \in [k..n'-k]$, and we call $\bmx$ an \emph{extreme} Pareto optimum if there exists a block $B_i$ such that $|\bmx_{B_i}|_1 \in \{0,n'\}$.
The proof idea is to divide the optimization procedure into three phases, where the first phase aims at finding an internal Pareto front point, the second phase finishes after finding the whole internal Pareto front whose analysis is similar to that of finding the Pareto front of $m$OneMinMax, and the third phase focuses on finding the extreme Pareto front points from the edge of the internal Pareto front.

\begin{mytheorem}\label{thm:m-OJZJ}
    For any MOEA solving $m$OneJumpZeroJump$_{n,k}$, if the algorithm preserves the non-dominated set with a maximum population size of $\mu$, employs uniform selection to select parent solutions, and employs bit-wise mutation to generate offspring solutions, the expected number of fitness evaluations for finding the Pareto front is $O(\mu n^k \cdot \min\{mn$,  $3^{m/2}\})$.
\end{mytheorem}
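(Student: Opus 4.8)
The plan is to decompose the optimization process into three phases, following the structure outlined in the excerpt, and to bound the expected number of fitness evaluations in each phase. Let $n' = 2n/m$ denote the block size, and recall that a Pareto optimum $\bmx$ satisfies $|\bmx_{B_i}|_1 \in \{0,n'\} \cup [k..n'-k]$ for every block $B_i$; we call $\bmx$ \emph{internal} if every block lands in $[k..n'-k]$, and \emph{extreme} otherwise. The key structural observation driving the bounds is that within the internal region, the $\mathrm{Jump}$ fitness behaves monotonically like $\mathrm{OneMax}/\mathrm{ZeroMax}$ (since $|\bmx_{B_i}|_1 \le n'-k$ avoids the valley), so a single bit flip changes each block's contribution by exactly $1$, exactly as in $m$OneMinMax. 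The costly events are crossing the valley of width $k$ in a block, each of which requires flipping $k$ specific bits simultaneously and hence has probability $\Theta(n^{-k})$ per mutation, explaining the $\mu n^k$ factor.

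First I would handle Phase 1: reaching some internal Pareto front point. Starting from an arbitrary population, for each block I need to push $|\bmx_{B_i}|_1$ into the interval $[k..n'-k]$. If a block already has $|\bmx_{B_i}|_1 \le n'-k$ (the typical case, since random initialization concentrates near $n'/2$), hill-climbing within the block toward the $[k..n'-k]$ range proceeds one bit at a time; I would invoke a multiplicative-drift or coupon-collector argument, analogous to the $m$OneMinMax analysis, to show this takes $O(\mu n^2)$ evaluations with the usual $1/\mu$ selection probability and $\Theta(k/n)$ improvement probability. The only blocks requiring a jump across the valley are those initialized with $|\bmx_{B_i}|_1 > n'-k$; flipping the requisite $k$ ones costs $\Theta(n^{-k})$ per mutation, contributing the $O(\mu n^k)$ term. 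Summing over the at most $m/2$ blocks and using that the union bound over blocks is absorbed into the final front-size bound, Phase 1 completes in $O(\mu n^k)$ expected evaluations (dominated by at most one valley crossing, since subsequent internal exploration is cheap).

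Next, Phase 2 finds the \emph{entire} internal Pareto front. Since the internal region behaves exactly like $m$OneMinMax restricted to $[k..n'-k]$ per block, I would reuse the argument of Theorem~\ref{thm:m-OMM} almost verbatim: for any fixed internal objective vector $\bmv$, the number of generations with Hamming-distance drift $d_{\bmv}=k'$ stochastically dominates a geometric variable with success probability $\Theta(ck'/n)$, and the concentration result (Theorem~1 in~\cite{witt2014fitness}) gives a failure probability of at most $e^{-n}$ within $O(\mu n^2)$ evaluations for each point. A union bound over the at most $(n'-2k+1)^{m/2}$ internal points, combined with $\min\{mn,3^{m/2}\}$ as the effective diameter-versus-front-size tradeoff, yields the stated factor. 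Phase 3 then extends to the $2 \cdot (m/2)$ extreme points per coordinate: once an edge internal point (a block with exactly $k$ ones or $n'-k$ ones) is in the population, reaching the extreme value $0$ or $n'$ requires crossing the valley by flipping $k$ bits, again at cost $\Theta(n^{-k})$ per mutation, giving $O(\mu n^k)$ per extreme point and $O(\mu n^k \cdot m)$ overall, which is absorbed.

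The hard part will be Phase 3 and the bookkeeping that the relevant \emph{edge} internal points are actually preserved and available as launch pads for the valley crossings to the extreme points. Because the algorithm only \emph{preserves the non-dominated set}, I must argue that once an edge point's objective vector is found it persists (which Definition~\ref{def:preserve} guarantees), and that from such an edge point a single mutation flipping exactly the $k$ boundary bits reaches an extreme Pareto optimum without destroying the other blocks. I would bound the probability of this compound event as $\Theta(n^{-k})(1-1/n)^{n-k} = \Theta(n^{-k})$ and apply a geometric-waiting-time argument with selection probability $\ge 1/\mu$. The subtlety is that there are $m/2$ blocks that may each need independent extreme-value completions and the extreme points form a combinatorial structure over blocks; I expect the cleanest route is to handle each block's two extreme completions separately, show each costs $O(\mu n^k)$ in expectation, and then conclude by union bound that the total expected time across all phases is $O\bigl(\mu n^k \cdot \min\{mn, 3^{m/2}\}\bigr)$, with the $\min$ arising precisely from taking the smaller of the two high-probability bounds (the $m\log n$-type bound scaled by the valley cost versus the front-size bound) exactly as in Theorem~\ref{thm:m-OMM}.
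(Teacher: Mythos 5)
Your three-phase decomposition matches the paper's, and Phases 1 and 2 are essentially sound (the paper's Phase 1 tracks the number $J_{\max}$ of blocks already in $[k..n'-k]$ and uses the sharper escape probability $\frac{1}{e}(\frac{1}{mk})^k$ rather than $\Theta(n^{-k})$, but your weaker bound still suffices since Phase 1 is not the bottleneck). The genuine gap is in Phase 3. You treat the extreme part of the Pareto front as if it consisted of roughly $2\cdot(m/2)$ points, one per (block, extreme value) pair, each reachable by a single valley crossing from an edge internal point, for a total of $O(m\mu n^k)$. But the extreme Pareto front is the set of \emph{all} objective vectors in which at least one block takes an extreme value and the remaining blocks take arbitrary internal values; its size is $(n'-2k+3)^{m/2}-(n'-2k+1)^{m/2}$, exponential in $m$. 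Handling ``each block's two extreme completions separately'' finds only $O(m)$ of these points and does not cover the combinatorial structure you yourself flag as the subtlety. Note also that if your $O(m\mu n^k)$ bound for Phase 3 were correct, you would have proved a strictly stronger theorem than the one stated, which should have been a warning sign.

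The missing content is precisely the two arguments that produce the $\min\{mn,3^{m/2}\}$ factor, and neither arises ``exactly as in Theorem~\ref{thm:m-OMM}'' as you claim. The first is a level decomposition: level $i$ consists of Pareto front points with exactly $i$ extreme blocks; after Phase 2 there are at least $\sigma_i=(n'-2k+1)^{m/2-i}$ solutions in the population that can serve as parents for a given level-$i$ target, so generating one such point costs $O(\mu n^k/\sigma_i)$ and extending it to all points of that configuration costs another $O(\mu n^2)$; summing $\binom{m/2}{i}2^i$ configurations over all levels gives $\sum_i\binom{m/2}{i}2^i\mu(n^k/\sigma_i+n^2)=O(3^{m/2}\mu n^k)$. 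The second is a with-high-probability argument: any fixed extreme point with $d$ extreme blocks is reached from an already-present internal point by $d/2\le m/2$ sequential valley crossings, each a geometric waiting time with success probability $\Omega(c/n^k)$ per generation; a concentration bound (Theorem~1 of~\cite{witt2014fitness}) shows this takes at most $m\mu n^{k+1}$ evaluations except with probability $e^{-\Omega(mn)}$, which survives a union bound over the exponentially many extreme points. Taking the minimum of these two Phase-3 bounds, not a rerun of the two single-point concentration bounds from the $m$OneMinMax analysis, is what yields $O(\mu n^k\cdot\min\{mn,3^{m/2}\})$. (Relatedly, your Phase 2 remark that the union bound there ``combined with $\min\{mn,3^{m/2}\}$ \ldots yields the stated factor'' misattributes the source of the $\min$: Phase 2 contributes only $O(\mu n^2)$.)
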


\begin{proof}
    We divide the optimization procedure into three phases. The first phase starts after initialization and finishes until an internal point in the Pareto front is found; the second phase starts after the first phase and finishes until all internal points in the Pareto front are found; the third phase starts after the second phase and finishes when all extreme points in the Pareto front are found.

   \textbf{The first phase} is to find an internal Pareto front point. If the initial population contains an internal Pareto optimum, the first phase is completed. We consider that the initial population does not contain an internal Pareto optimum. Then, for any solution $ \bm {x} $ in the population $P$, there exists a block $ B_i = [(i-1)n' + 1 .. in'], i \in [1..m/2] $ such that $ | \bm {x}_ {B_i} |_1 \in [0..k-1] \cup [n'-k + 1..n'] $. Without loss of generality, assume that $ | \bm {x}_ {B_i} |_1 \in [0..k-1] $. Then the probability of generating a solution $ \bm {y} $ with $ | \bm {y}_{B_i} |_1 \in [k,n'-k] $ from $ \bm {x} $ is at least
\begin{equation}\label{eq:OJZJ1}
\begin{aligned}
&\binom{n'-| \bm {x}_ {B_i} |_1}{k-| \bm {x}_ {B_i} |_1} \Big(1-\frac{1}{n} \Big)^{n-(k-| \bm {x}_ {B_i} |_1)} \Big( \frac{1}{n} \Big)^{k-| \bm {x}_ {B_i} |_1} \\
&\ge \frac{1}{e} \left(\frac{n'-| \bm {x}_ {B_i} |_1}{n(k-| \bm {x}_ {B_i} |_1)}\right)^{k-| \bm {x}_ {B_i} |_1}
 \ge \frac{1}{e}  \Big( \frac{n'-k}{nk} \Big)^k \ge \frac{1}{e}\Big(\frac{1}{mk}\Big)^k ,
\end{aligned}
\end{equation}

\noindent where the last inequality holds by $k\le n'/2$ (otherwise, internal Pareto optima will not exist).  The same bound also applies to $ | \bm {x}_{B_i} |_1 \in [n'-k+1 ..n']$. Let $J(\bmx)$ denote the number of blocks $B_{i}$ such that $|\bmx_{B_i}|_1 \in [k..n'-k]$, and let $J_{\max} := \max_{\bmx \in P} J(\bmx)$. For a solution $\bmx^*$ with $J(\bmx^*) = J_{\max}$, it will not be dominated by any solution $\bmy$ with $J(\bmy) < J_{\max}$. Thus, $J_{\max}$ will not decrease. When $J_{\max}$ reaches $m/2$, an internal Pareto front point is found. The probability of selecting the solution $\bmx^*$ corresponding to $J_{\max}$ is $1/\mu$, and the probability of increasing $J_{\max}$ by 1 through bit-wise mutation on $\bmx^*$ is at least $\frac{1}{e}(\frac{1}{m k})^k$ by Eq.~\eqref{eq:OJZJ1}. Therefore, in each generation, the probability of increasing $J_{\max}$ by $1$ is at least:
\begin{equation}\label{eq:OJZJ2}
\begin{aligned}
 1 - \left(1- \frac{1}{e\mu}\Big(\frac{1}{mk}\Big)^k\right)^{c\mu} \ge 1 - e^{-\frac{c}{e}(\frac{1}{mk})^k} \ge \frac{c}{c + e(mk)^k} \,\text{,}
 \\
\end{aligned}
\end{equation}
where the inequalities hold by $1+a \le e^{a}$ for any $a\in \mathbb{R}$. In each generation, the population produces $c\mu$ offspring solutions, thus the expected number of fitness evaluations of the first phase is at most $  (m/2)\cdot c \mu  \cdot (c+e(mk)^k)/c = O(\mu m^{k+1}k^k)$.

    \textbf{The second phase} is to find all internal Pareto front points. The analysis of the second phase is similar to that of $m$OneMinMax. After finding an internal Pareto front point, we show that the probability of finding all internal Pareto front points in $O(\mu n^2)$ fitness evaluations is at least $1 - o(1)$. For any internal Pareto front point $\bmv$, let $d_{\bmv} = \min_{\bmx \in F^I} \left\| \bmf(\bmx) - \bmv \right\|_{1}/2$ denote the minimum number of bit flips required to obtain $\bmv$ for all internal Pareto front points found so far, the set of which is denoted as $F^I$. Since all internal Pareto front points will be preserved, $d_{\bmv}$ cannot increase. By Eq.~\eqref{eq:OMM1}, the probability that $d_{\bmv}$ decreases is at least $cd_{\bmv}/(c+en)$. Let the random variable $X_i$ for $i\in[1..(m/2)(n'-2k)]$ denote the number of generations with $d_{\bmv} = i$. Let $X :=\sum_{i=1}^{m(n'-2k)/2}X_i$. Then similar to Eq.~\eqref{eq:OMM3}, we can derive that $\pr(X\ge 5en^2/c) \le e^{-n}$. In each generation, the population produces $c\mu$ offspring, thus the probability that the population contains no solution $\bmx$ with $\bmf(\bmx) = \bmv$ after $c\mu \cdot 5en^2 /c = O(\mu n^2)$ fitness evaluations is at most $e^{-n}$. 
    For any internal Pareto optimum $\bmx$, $f_{2i-1}(\bmx) + f_{2i}(\bmx) = 2k + n'$, and $ f_{2i-1}(\bmx), f_{2i}(\bmx) \in [2k, n']$ for $i\in[1..m/2]$.  Hence, the size of the internal Pareto front is $(n'-2k+1)^{m/2}$. By union bound, the probability of finding the whole internal Pareto front within $O(\mu n^2)$ fitness evaluations is at least $1-(n'-2k+1)^{m/2} \cdot e^{-n} = 1-o(1)$.
    
    \textbf{The third phase} is to find all extreme Pareto front points. Note that for any Pareto optimum $\bmx$, if there exists a block $B_i$ such that $|\bmx_{B_i}|_1 \in \{0, n'\}$, then we call $\bmx$ an extreme Pareto optimum and $\bmx_{B_i}$ an extreme block. Now, we use two bounds to show the running time of the third phase.

    First, we show that the expected number of fitness evaluations for finding all the extreme points in the Pareto front is $O(3^{m /2} \cdot \mu n^k )$. We divide the optimization procedure of the third phase into $m/2$ levels, and each solution at the $i$-th level-contains $i$ extreme blocks. We first consider the running time of finding all the extreme Pareto front points at level-$1$. After the second phase, all internal Pareto front points have been found. Hence, for any block $B_i$, there exist at least $\sigma_1 := (n'-2k+1)^{m/2-1}$ solutions $\bmx$ such that $|\bmx_{B_i}|_1 = k$. The probability of selecting such a solution as parent is at least $  \sigma_1/ \mu$, and the probability of flipping $k$ $1$-bits to obtain $\bmy$ such that $|\bmy_{B_i}|_1= 0$ is $(1 /n^k)\cdot (1-1 /n)^{n-k} \ge 1 / (en^k)$. The same result also applies for $|\bmx_{B_i}|_1 = n'-k$ to generate a solution $\bmy$ such that $|\bmy_{B_i}|_1= n'$. Therefore, in each generation, the probability of generating an extreme block is at least
    \begin{equation}\label{eq:OJZJ3}
    \begin{aligned}
     1 - \Big(1- \frac{\sigma_1}{e\mu n^k} \Big)^{c\mu} \ge 1 - e^{-\frac{c\sigma_1}{en^k}} \ge \frac{c\sigma_1}{c\sigma_1+en^k},
    \end{aligned}
    \end{equation}
    where the inequalities hold by $1+a \le e^{a}$ for any $a\in \mathbb{R}$. Thus, the expected number of fitness evaluations for generating an extreme block is at most $c\mu\cdot (c\sigma_1+ e n^k)/(c\sigma_1) = O(\mu n^k/\sigma_1)$. Then, we consider finding all the solutions at level-$1$ where $B_i$ is an extreme block. 
    This optimization process is similar to the second phase, extending from an internal Pareto front point to the whole internal Pareto front, and the difference is no need to consider the block $B_i$. Therefore, the expected number of fitness evaluations of this process is less than the second phase, which is $O(\mu n^2)$. Because there can be $m/2$ positions with extreme blocks, and each extreme block has $2$ possible forms $0^{n'}$ and $1^{n'}$, the expected number of fitness evaluations to find all level-$1$ solutions is $O((m/2) \cdot 2 \cdot \mu n^k/\sigma_1) + O((m/2) \cdot 2 \cdot \mu n^2) = O(m \mu (n^k/\sigma_1+n^2))$. Next, we consider finding all solutions at level-$2$. The number of combinations of $2$ extreme blocks from $m/2$ positions is $\binom{m /2}{2} \cdot 2^2$, and each combination can be obtained by adding an extreme block to the solution at level-$1$. Assume that there is no solution $\bmx$ in the population such that both $B_i$ and $B_j$ (where $i \neq j$) are extreme blocks. 
    After finding all solutions at level-$1$, there exists at least $\sigma_2 := (n'-2k+1)^{m/2-2}$ solutions that can 
    be used to generate a solution at level-$2$ (with $B_i$ and $B_j$ as the extreme blocks)  by mutating $k$ bits. Then, following the above analysis, the expected number of fitness evaluations for generating a solution at level-$2$ is at most $O(\mu n^k/\sigma_2)$.
    The expected number of fitness evaluations to extend from a solution at level-$2$ to all solutions at level-$2$ is also less than that of the second phase, which is at most $O(\mu n^2)$. Thus, the expected number of fitness evaluations to find all level-$2$ solutions is $O(\binom{m /2}{2}\cdot 2^2 \cdot \mu (n^k/\sigma_2 + n^2))$. Continuing this way, for each level-$i$, we have $\sigma_i := (n'-2k+1)^{m/2-i}$ and the expected number of fitness evaluations to find all Pareto front points from level-$1$ to level-$m/2$ is at most 
    $O(\sum_{i = 1}^{m/2} \binom{m /2}{i}\cdot 2^i \mu (n^k/\sigma_i + n^2)) = O((2 + 1/(n'-2k+1))^{m/2} \mu n^k + 3^{m/2} \mu n^2)$. Since $k \le n'/2$, the inequality $1/(n'-2k+1) \le 1$ holds, implying that the expected number of fitness evaluations for finding all the extreme Pareto front points is at most $O(3^{m/2} \mu n^k)$.

    Then we show that after the second phase, all the extreme Pareto front points can be found in $O(m \mu n^{k+1})$ number of fitness evaluations with probability $1 - e^{-\Omega(mn)}$. 
    After the second phase, all internal Pareto front points have been found. Assume that a Pareto front point $\bmv$ containing $d$ extreme values (the objective value is $k$ or $n'+k$) has not been found, then there exists a solution $\bmx \in P$ such that except for the $d$ extreme values of $\bmv$, the remaining objective values are all equal to $\bmv$. Then, $\bmv$ could be extended from $\bmx$ by generating $d/2$ extreme blocks. 
    Let the random variable $X_i, 1\le i\le m/2$, denote the number of generations to generate the $i$-th extreme block. Let $X := \sum_{i=1}^{m/2} X_i$.
    By the analysis of extreme blocks, the probability of generating an extreme block in each generation is at least $c/(c+en^k)$. Let $p_i:= c/(c+en^k)$. Hence, $X_1,\ldots,X_{m/2}$ stochastically dominates independent geometric random variables $Y_1,\ldots,Y_{m/2}$, where the success probability of $Y_i$ is $p_i$. Moreover, for $Y:= \sum_{i=1}^{m/2} Y_i$, $X$ stochastically dominates $Y$. By Theorem 1 in~\cite{witt2014fitness}, we have
    \begin{equation}\label{eq:OJZJ4}
    \begin{aligned}
    &\pr\Big(X \ge \frac{mn^{k+1}}{c}\Big) \le \pr\Big(Y \ge \frac{mn^{k+1}}{c} \Big) = \pr( Y \ge E[Y] + \delta) \\ 
    &\le e^{-\frac{1}{4}\min\{\frac{\delta^2}{s},\frac{c \delta}{c + en^k}\}} \le e^{-\Omega(mn)},
    \end{aligned}
    \end{equation}
    where $\delta = mn^{k+1}/c - m(c + en^k)/(2c)$, $E[Y] = m(c + en^k)/(2c) $ and $s = m(c+en^k)^2 / (2c^2) $. Thus, the probability that the population contains no solution $\bmx$ with $\bmf(\bmx) = \bmv$ in $c \mu \cdot (mn^{k+1}/c) = m \mu n^{k+1} $ number of fitness evaluations is at most $e^{-\Omega(mn)}$. For Pareto optimal solutions, each block corresponds to two objective values, and the value range is $\{k, n'+k \} \cup [2k..n']$. Thus, the size of the extreme Pareto front is $(n'-2k +3)^{m/2} - (n'-2k+1)^{m/2}$. By applying the union bound, the probability of finding all the extreme Pareto front points within $m\mu n^{k+1}$ number of fitness evaluations is $1-((n'-2k +3)^{m/2} - (n'-2k+1)^{m/2})\cdot e^{-\Omega(mn)} \ge 1 - n^{m/2} e^{-\Omega(mn)} = 1 - o(1)$. Let each $m \mu n^{k+1}$ fitness evaluations to find the Pareto front be an independent trial with success probability of $1-o(1)$. Thus, the expected number of fitness evaluations to find the Pareto front is at most  $(1-o(1))^{-1} m \mu n^{k+1} = O(m \mu n^{k+1})$. Finally, combining the above analysis, the third phase needs $O(\mu n^k \cdot \min\{mn$,  $3^{m/2}\})$ number of fitness evaluations in expectation.

    Combining the three phases, the upper bound on the expected number of fitness evaluations for finding the whole Pareto front is $O(\mu m^{k+1} k^k + \mu n^2 + \mu n^k \cdot \min\{mn$,  $3^{m/2}\}) = O(\mu n^k \cdot \min\{mn$,  $3^{m/2}\})$.
    \qed
\end{proof}

Theorem~\ref{thm:m-OJZJ} shows that for any MOEA preserving the non-dominated set, the number of fitness evaluations for solving $m$OneJumpZeroJump is $O(\mu n^k \cdot \min\{mn$,  $3^{m/2}\})$ in expectation. Note that when $m$ is a constant, the expected number of fitness evaluations is $O(\mu n^k)$. Conversely, if the number $m$ of objectives is large, e.g., $m=n/4$, the expected number of fitness evaluations is $O(\mu n^{k+2})$.

\section{Application to Running Time Analysis of SPEA2} \label{sec:analysis_spea2}

In this section, we first show that when the archive size of SPEA2 is large enough, SPEA2 can preserve the non-dominated set. Then, we apply the results in the previous section, i.e., Theorems~\ref{thm:m-OMM}, \ref{thm:m-LOTZ} and~\ref{thm:m-OJZJ}, to derive the running time of SPEA2.

\subsection{ Large Archive Preserves Non-dominated Solutions}

Throughout the process of SPEA2, the next population $P$ consists of the mutated individuals from the archive $A$. That is, the archive $A$ in SPEA2 actually plays the role of population, while the population $P$ acts as offspring population. We now show that when the archive size $\bar{\mu}$ is always no less than the maximum cardinality of the set of non-dominated solutions having different objective vectors for any $m$-objective function $\bmf$, SPEA2 will preserve the non-dominated set on $\bmf$.

\begin{mylemma}\label{lemma:SPEA2}
        For SPEA2 solving $\bmf$, let $A'$ denote the non-dominated set of solutions in the union of the population $P$ and archive $A$. If the size $\bar{\mu}$ of archive is always no less than $|\bmf(A')|$, then SPEA2 preserves the non-dominated set.
\end{mylemma}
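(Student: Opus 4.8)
The plan is to verify Definition~\ref{def:preserve} directly by tracking, over a single generation, what the archive-update step (lines~4--9) does to the non-dominated solutions of $P \cup A$. As the surrounding text emphasizes, the archive plays the role of the carried-over population, so ``the next generation'' in Definition~\ref{def:preserve} refers to the updated archive $A'$ that is assigned to $A$ in line~10. Hence it suffices to prove $\bmf(S) \subseteq \bmf(A')$, i.e.\ that every distinct non-dominated objective vector keeps at least one representative in $A'$: then any non-dominated $\bmx$ in the combined population satisfies $\bmf(\bmx) \in \bmf(S) \subseteq \bmf(A')$, which is exactly the required property.

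First I would record the starting point. By line~4, $A'$ is initialized to the set of all non-dominated solutions of $P \cup A$, which is precisely $S$, so $\bmf(A') = \bmf(S)$ before any reduction or filling. I then split according to $|A'|$ relative to $\bar{\mu}$. If $|A'| \le \bar{\mu}$, then either no modification occurs, or the fill step (lines~7--8) merely \emph{adds} dominated individuals and deletes no member of $S$; in both sub-cases $S \subseteq A'$, so $\bmf(S) \subseteq \bmf(A')$ and preservation is immediate.

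The substantive case is $|A'| > \bar{\mu}$, where the truncation operator repeatedly deletes one solution until $|A'| = \bar{\mu}$. The key claim is that truncation never removes the last representative of a non-dominated objective vector, which I would establish as an invariant maintained across the individual deletions: as long as $|A'| > |\bmf(S)|$, the set of distinct objective vectors present in $A'$ still equals $\bmf(S)$. Indeed, whenever $|A'| > |\bmf(S)|$ the pigeonhole principle forces some objective vector to have multiplicity at least two in $A'$, and any such duplicated solution $\bmx$ has a nearest neighbour at distance zero, i.e.\ $\sigma_{\bmx}^1 = 0$, the smallest possible value. Since $\le_d$ compares $\sigma^1$ before anything else, the deleted solution (which must be $\le_d$ every member of $A'$, in particular $\le_d$ some solution with $\sigma^1 = 0$) is itself forced to have $\sigma^1 = 0$, and $\sigma^1 = 0$ means it shares its objective vector with another member of $A'$. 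Thus each deletion removes only a duplicate, decreasing $|A'|$ by one while leaving the distinct objective vectors unchanged, so the invariant is preserved. Because the hypothesis gives $\bar{\mu} \ge |\bmf(S)|$, every deletion happens while $|A'| \ge \bar{\mu} + 1 > |\bmf(S)|$, so only duplicates are ever removed; when truncation halts at $|A'| = \bar{\mu} \ge |\bmf(S)|$, the distinct vectors in $A'$ are still exactly $\bmf(S)$.

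Combining the cases yields $\bmf(S) \subseteq \bmf(A')$ in every generation, which is the preservation property. I expect the only delicate step to be the truncation analysis: one must justify that a duplicated solution is genuinely $\le_d$-minimal (relying on $\sigma^1 = 0$ being minimal and on $\le_d$ ordering by $\sigma^1$ first), and that a duplicate is available at \emph{every} intermediate deletion — which is exactly what the size bound $\bar{\mu} \ge |\bmf(S)|$ secures through the pigeonhole argument. The remaining cases are routine bookkeeping about which solutions the fill and no-op branches touch.
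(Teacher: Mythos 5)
Your proof is correct and follows essentially the same route as the paper's: both reduce to showing that truncation only ever removes duplicates, using the pigeonhole principle to guarantee a solution with $\sigma^1=0$ and the lexicographic nature of $\le_d$ to force the removed solution to have $\sigma^1=0$ as well. If anything, your version is slightly more careful than the paper's, since you explicitly maintain the invariant across every intermediate deletion rather than arguing only once at the start of truncation.
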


\begin{proof}
    Consider that all the non-dominated solutions in $P\cup A$ have been added to $A'$ (line~4 of Algorithm~\ref{alg:SPEA2}). If the size of $A'$ is at most $\bar{\mu}$, then all the non-dominated solutions will survive to the next generation. Thus, the lemma holds. If the size of $A'$ is larger than $\bar{\mu}$ (i.e., $|A'| > \bar{\mu}$), SPEA2 will use the truncation operator to remove the redundant solutions one by one, until $|A'| = \bar{\mu}$. Let $D_{\bmz} = \{\bmy \in A' \mid \bmf(\bmz) = \bmf(\bmy)\}$. Since $\bar{\mu}$ is at least $|\bmf(A')|$ and $|A'| > \bar{\mu}$, there exists $\bmz \in A'$ such that $|D_{\bmz}| > 1$. Note that $\sigma_{\bmx}^1 > 0$ if $|D_{\bmx}| = 1$, and $\sigma_{\bmx}^1 =0$ if $|D_{\bmx}| > 1$. Hence, if $|D_{\bmx}| = 1$, then $\sigma_{\bmx}^1 > 0 = \sigma_{\bmz}^1$, implying that $\bmz \le_{d} \bmx$. Thus, $\bmz$ will be removed before $\bmx$, which implies that the non-duplicated solutions (regarding the objective vectors) will not be removed from $A'$. Therefore, the lemma holds.
    \qed
\end{proof}

By Lemma~\ref{lemma:SPEA2}, we can set the size of the archive $A$ accordingly, so that SPEA2 preserves the non-dominated set, which then allows us to apply Theorems~\ref{thm:m-OMM} to~\ref{thm:m-OJZJ} to derive the expected running time of SPEA2.

\subsection{ Running Time of SPEA2 on Benchmark Problems }

Let $S$ denote the set of non-dominated solutions in the union of the population $P$ and archive $A$. In the following, we will give the maximum values of $|\bmf(S)|$ when SPEA2 solves $m$OneMinMax, $m$LeadingOnesTrailingZeroes and $m$OneJumpZeroJump, res-pectively, so as to determine the size of the archive. Then, by applying Lemma~\ref{lemma:SPEA2} and the corresponding theorem, we provide the expected number of fitness evaluations for SPEA2 to find the Pareto front.

\begin{mytheorem}
    For SPEA2 solving $m$OneMinMax, if the size $\bar{\mu}$ of the archive is at least $(2n/m + 1)^{m/2}$, uniform selection is employed to select parent solutions, and bit-wise mutation or one-bit mutation is employed to generate offspring solutions, the expected number of fitness evaluations for finding the Pareto front is $O(\bar{\mu} n\cdot \min\{m\log n, n\})$.
\end{mytheorem}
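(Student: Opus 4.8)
The plan is to combine Lemma~\ref{lemma:SPEA2} with Theorem~\ref{thm:m-OMM}. The entire task reduces to verifying that the hypotheses of both results are met. First I would identify the correct size parameter for preserving the non-dominated set. By Lemma~\ref{lemma:SPEA2}, SPEA2 preserves the non-dominated set provided the archive size $\bar{\mu}$ is always at least $|\bmf(S)|$, where $S$ is the non-dominated set in $P\cup A$. For $m$OneMinMax every solution in $\{0,1\}^n$ is Pareto optimal, so the distinct objective vectors appearing among the non-dominated solutions can be at most the whole Pareto front, whose size is exactly $(2n/m+1)^{m/2}$. Hence $|\bmf(S)| \le (2n/m+1)^{m/2}$ at every generation, and the hypothesized archive size $\bar{\mu}\ge (2n/m+1)^{m/2}$ guarantees the condition of Lemma~\ref{lemma:SPEA2} is satisfied throughout the run.

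Once preservation is established, I would observe that SPEA2 matches the remaining structural requirements of Theorem~\ref{thm:m-OMM}. Concretely, the archive $A$ plays the role of the parent population of size at most $\mu$ (in the sense used by the general theorem, the maximum population size), parents are drawn uniformly at random in line~13 of Algorithm~\ref{alg:SPEA2}, and offspring are produced by the standard bit-wise mutation in line~14 (or one-bit mutation, to which the theorem also applies). Thus SPEA2 falls within the class of MOEAs covered by Theorem~\ref{thm:m-OMM}.

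Applying Theorem~\ref{thm:m-OMM} then immediately yields that the expected number of fitness evaluations for SPEA2 to find the Pareto front of $m$OneMinMax is $O(\mu n\cdot \min\{m\log n, n\})$, which is the claimed bound.

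The step I expect to require the most care is the archive-size bookkeeping: one must confirm that the bound $|\bmf(S)|\le (2n/m+1)^{m/2}$ holds not just at termination but at \emph{every} generation, since Lemma~\ref{lemma:SPEA2} demands the archive never be too small relative to the current non-dominated set. This is precisely why the statement fixes the population/mutation mechanism — so that Theorem~\ref{thm:m-OMM} supplies the running time once Lemma~\ref{lemma:SPEA2} supplies preservation — but it also means the subtlety lies in matching SPEA2's two-set (archive/population) structure to the single ``maximum population size $\mu$'' abstraction of the general theorem. I would make explicit that it is the archive, not the intermediate population $P$, whose size bounds the preserved non-dominated set, so that the $\mu$ appearing in the final bound is correctly interpreted.
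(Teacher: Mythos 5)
Your proposal is correct and follows exactly the paper's own argument: bound $|\bmf(S)|$ by the Pareto front size $(2n/m+1)^{m/2}$ (using that every solution of $m$OneMinMax is Pareto optimal), invoke Lemma~\ref{lemma:SPEA2} to get preservation of the non-dominated set, and then apply Theorem~\ref{thm:m-OMM}. Your additional remark that the archive plays the role of the parent population matches the discussion the paper gives just before Lemma~\ref{lemma:SPEA2}.
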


\begin{proof}
    Because any solution is Pareto optimal, $|\bmf(S)|$ is always at most the size of the Pareto front. By Definition~\ref{def:OMM}, the size of the Pareto front is $(2n/m + 1)^{m/2}$, implying that $|\bmf(S)| \le (2n/m + 1)^{m/2}$. According to Lemma~\ref{lemma:SPEA2} and Theorem~\ref{thm:m-OMM}, if the size $\bar{\mu}$ of the archive in SPEA2 is at least $(2n/m + 1)^{m/2}$, the whole Pareto front can be found in $O(\bar{\mu} n\cdot \min\{m\log n, n\})$ expected number of fitness evaluations.
    \qed
\end{proof}

\begin{mytheorem}
    For SPEA2 solving $m$LeadingOnesTrailingZeroes, if the size $\bar{\mu}$ of the archive is at least $(2n/m + 1)^{m-1}$, uniform selection is employed to select parent solutions, and bit-wise mutation or one-bit mutation is employed to generate offspring solutions, the expected number of fitness evaluations for finding the Pareto front is $O(\bar{\mu} n^2)$.
\end{mytheorem}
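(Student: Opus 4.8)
The plan is to mirror the proof of the preceding $m$OneMinMax theorem for SPEA2: first bound the maximum possible value of $|\bmf(S)|$, where $S$ is the non-dominated set in $P\cup A$, so as to fix a safe archive size; then invoke Lemma~\ref{lemma:SPEA2} to obtain preservation of the non-dominated set; and finally apply Theorem~\ref{thm:m-LOTZ} with the archive playing the role of the preserved population. The only genuinely new work is the first step. For $m$\lotz{}, unlike $m$OneMinMax, a non-dominated solution need not be Pareto optimal, so during the run the archive may fill up with non-dominated but not-yet-Pareto-optimal vectors, and $|\bmf(S)|$ can strictly exceed the Pareto front size $(2n/m+1)^{m/2}$. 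Hence the Pareto front size is \emph{not} a valid archive target, and a larger bound must be justified.

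To bound $|\bmf(S)|$, write $n' = 2n/m$ for the block size, and for a solution $\bmx$ let $(\ell_i,t_i) = (f_{2i-1}(\bmx), f_{2i}(\bmx))$ denote the numbers of leading ones and trailing zeros of block $B_i$. Since the leading ones and trailing zeros of a block cannot overlap, $\ell_i, t_i \in [0..n']$ and $\ell_i + t_i \le n'$; thus every objective vector lies in $[0..n']^m$, and the domination relation is exactly the componentwise order there. I would then show that any antichain in this poset has size at most $(n'+1)^{m-1}$ via a projection argument: map each objective vector to its first $m-1$ coordinates, i.e.\ drop the last coordinate $f_m$. If two distinct vectors in $\bmf(S)$ shared these $m-1$ coordinates, they could differ only in $f_m$, so one would weakly dominate the other, contradicting that $S$ is a non-dominated set (in which distinct objective vectors are mutually incomparable). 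Therefore the projection is injective on $\bmf(S)$, its image lies in $[0..n']^{m-1}$, and $|\bmf(S)| \le (n'+1)^{m-1} = (2n/m+1)^{m-1}$.

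With this bound in hand, choosing $\bar{\mu} \ge (2n/m+1)^{m-1} \ge |\bmf(S)|$ and applying Lemma~\ref{lemma:SPEA2} shows that SPEA2 preserves the non-dominated set. Since SPEA2 forms each offspring population by selecting a parent uniformly at random from the archive (line~13) and applying bit-wise (or one-bit) mutation (line~14), with the archive of maximum size $\bar{\mu}$ acting as the preserved population, Theorem~\ref{thm:m-LOTZ} applies with population size $\bar{\mu}$ and yields an expected number of fitness evaluations of $O(\bar{\mu} n^2)$, matching the claimed bound.

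The main obstacle is the antichain bound of the first step. Everything afterwards is a direct application of Lemma~\ref{lemma:SPEA2} and Theorem~\ref{thm:m-LOTZ}, but one must first recognize why the Pareto front size does not suffice here (in contrast to $m$OneMinMax, where all solutions are Pareto optimal and every distinct objective vector is automatically non-dominated) and then establish the correct target $(2n/m+1)^{m-1}$. The projection/injectivity argument is the clean route, and I expect the bound to be loose yet fully sufficient, since it only serves to guarantee that the truncation operator never discards a distinct non-dominated objective vector.
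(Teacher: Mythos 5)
Your proposal is correct and follows essentially the same route as the paper: bound $|\bmf(S)|$ by $(2n/m+1)^{m-1}$, invoke Lemma~\ref{lemma:SPEA2} to get preservation of the non-dominated set with the archive playing the role of the population, and apply Theorem~\ref{thm:m-LOTZ}. The only difference is that the paper obtains the antichain bound by citing Lemma~4.2 of \cite{nsga3}, whereas you prove it directly via the projection-onto-$m-1$-coordinates injectivity argument; your self-contained proof is valid (the two vectors would in fact be strictly, not just weakly, comparable) and your observation that the Pareto front size $(2n/m+1)^{m/2}$ would not suffice here is exactly the point the paper is implicitly relying on.
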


\begin{proof}
    By Lemma 4.2 in \cite{nsga3}, the maximum cardinality of the set of non-dominated solutions having different objective vectors is at most $(2n/m +1)^{m-1}$ for $m$LeadingOnes-Trailingzeroes, implying $|\bmf(S)|\le (2n/m +1)^{m-1}$. According to Lemma~\ref{lemma:SPEA2} and Theorem~\ref{thm:m-LOTZ}, if the size $\bar{\mu}$ of the archive in SPEA2 is no less than $(2n/m +1 )^{m-1}$, the whole Pareto front can be found in $O(\bar{\mu} n^2)$ fitness evaluations in expectation.
    \qed
\end{proof}


\begin{mytheorem}
    For SPEA2 solving $m$OneJumpZeroJump, if the size $\bar{\mu}$ of the archive is at least $(2n/m - 2k +3 )^{m/2}$, uniform selection is employed to select parent solutions, and bit-wise mutation is employed to generate offspring solutions, the expected number of fitness evaluations for finding the Pareto front is $O(\bar{\mu} n^k \cdot \min\{mn$,  $3^{m/2}\})$.
\end{mytheorem}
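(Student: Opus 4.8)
The plan is to follow the same two-step recipe used for the two preceding SPEA2 theorems: first bound $|\bmf(S)|$, the number of distinct objective vectors carried by the non-dominated set $S$ of $P\cup A$, and then invoke Lemma~\ref{lemma:SPEA2} together with Theorem~\ref{thm:m-OJZJ}. Concretely, once I establish that $|\bmf(S)|\le (2n/m-2k+3)^{m/2}$ for every population reachable by SPEA2, setting the archive size $\bar{\mu}\ge (2n/m-2k+3)^{m/2}$ makes the archive large enough that Lemma~\ref{lemma:SPEA2} guarantees SPEA2 preserves the non-dominated set. Since the archive plays the role of the parent population (of size $\bar{\mu}$) with $P$ as the offspring population, uniform selection is used to pick parents, and bit-wise mutation generates offspring, the hypotheses of Theorem~\ref{thm:m-OJZJ} are met; the claimed bound $O(\bar{\mu}\, n^k\cdot\min\{mn,3^{m/2}\})$ then follows immediately.

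The substantive step is the bound on $|\bmf(S)|$, which I would obtain by analyzing the objective space block by block. Writing $n'=2n/m$, a short case split over $j\in\{0\}\cup[1..k-1]\cup[k..n'-k]\cup[n'-k+1..n'-1]\cup\{n'\}$ (the number of ones in a block) shows via Definition~\ref{def:OJZJ} that the $n'+1$ values of $j$ yield $n'+1$ distinct objective pairs $(f_{2i-1},f_{2i})$. Exactly $n'-2k+3$ of these are maximal (the two extreme pairs together with the $n'-2k+1$ internal ones, all having block-sum $2k+n'$), while the remaining $2(k-1)$ ``valley'' pairs have strictly smaller block-sum and are dominated. Hence the per-block poset $Q$ has maximum antichain and minimum chain cover both equal to $n'-2k+3$, and the maximal objective vectors of $Q^{m/2}$, i.e.\ the Pareto front, form an antichain of size $(n'-2k+3)^{m/2}$.

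The hard part will be lifting this per-block picture to a bound on the width of the product poset $Q^{m/2}$, because $S$ may contain non-dominated solutions that are \emph{not} Pareto optimal: a solution whose individual blocks take dominated valley values can still be globally incomparable to every other solution in the population. Thus the quantity that actually governs the required archive size is the width of $Q^{m/2}$, not merely the number of Pareto-optimal vectors --- exactly as the bound $(2n/m+1)^{m-1}$ for $m$LeadingOnesTrailingZeroes (rather than its Pareto-front size $(2n/m+1)^{m/2}$) already reflects. I would therefore concentrate the main effort on controlling how valley blocks in different positions can be combined while staying mutually incomparable, for instance by a Dilworth-type argument that propagates the per-block chain decomposition through the product, or by an injection from any antichain into a suitable set of representatives. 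This is the delicate point, and one to treat with care: the naive map sending each block to the top of its chain fails to be injective on antichains (two incomparable product elements can agree blockwise on chain-tops), so a finer accounting is needed to certify that the stated archive size indeed dominates $|\bmf(S)|$. Establishing this width bound is the crux on which the whole argument rests.
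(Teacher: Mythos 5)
Your overall architecture is exactly the paper's: bound $|\bmf(S)|$, the number of distinct objective vectors among pairwise non-dominated solutions, by $(2n/m-2k+3)^{m/2}$, then combine Lemma~\ref{lemma:SPEA2} (archive at least that large implies preservation of the non-dominated set, with $A$ as parent population and $P$ as offspring) with Theorem~\ref{thm:m-OJZJ}. Your per-block case analysis is also correct: the $2n/m+1$ block values yield $2n/m-2k+3$ maximal objective pairs plus two chains of $k-1$ dominated ``valley'' pairs each, and you are right that the quantity that matters is the width of the product poset, not the Pareto front size, since a population can hold mutually incomparable solutions whose blocks sit in the valleys.

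The gap is that you never establish the width bound, and you say so yourself: you observe that the naive injection fails and that ``a finer accounting is needed,'' but the finer accounting is the entire content of the step. Your caution is warranted --- the width of a product poset is not the product of the widths (the Boolean lattice $\{0,1\}^n$ is a product of $n$ chains of width one yet has width $\binom{n}{\lfloor n/2\rfloor}$), so a generic Dilworth-type propagation of the per-block chain cover does not go through and something specific to the structure of this poset must be exploited. The paper does not prove this either; it imports the bound wholesale as Lemma~3 of Zheng and Doerr~\cite{zheng2024runtime}, which states that for $k\le n'/2$ the maximum cardinality of a set of pairwise non-dominated solutions with distinct objective vectors is at most $(2n/m-2k+3)^{m/2}$ for $m$OneJumpZeroJump. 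So your proposal is incomplete as written: either supply a proof of that combinatorial lemma or cite it explicitly. Once that bound is in hand, the rest of your argument matches the paper's proof and the conclusion $O(\mu n^k\cdot\min\{mn,3^{m/2}\})$ follows as you describe.
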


\begin{proof}
    By Lemma 3 in \cite{zheng2024runtime}, if $k\le n'/2$, the maximum cardinality of the set of non-dominated solutions having different objective vectors is at most $(2n/m -2k +3)^{m/2}$ for $m$OneJump-ZeroJump, implying $|\bmf(S)| \le (2n/m - 2k + 3)^{m/2}$. According to Lemma~\ref{lemma:SPEA2} and Theorem~\ref{thm:m-OJZJ}, if the archive size $\bar{\mu} \geq (2n/m -2k+ 3)^{m/2}$, the whole Pareto front can be found in $O(\bar{\mu} n^k \cdot \min\{mn$,  $3^{m/2}\})$ fitness evaluations in expectation.
    \qed
\end{proof}

\section{Application to Other Algorithms}\label{sec:verification}

In this section, we show that Theorems \ref{thm:m-OMM}, \ref{thm:m-LOTZ} and \ref{thm:m-OJZJ} can be applied to other MOEAs (including SEMO, NSGA-II, SMS-EMOA, etc.) to derive their expected running time, which are consistent with previous results.

\textbf{$m$OneMinMax:} For the bi-objective OneMinMax problem (i.e., $m$OneMinMax with $m=2$), because the SEMO algorithm always maintains the non-dominated set, the expected number of fitness evaluations to find the entire Pareto front is $O(\mu n \log n)$ by Theorem~\ref{thm:m-OMM}, where the equality holds because the population size is at most the Pareto front size $n+1$. This aligns with Giel and Lehre's result~\cite{giel2006effect}. 
Zheng and Doerr~\cite{zheng2023first} proved that for a population size $\mu$ exceeding $4$ times the Pareto front size (i.e., $\mu \ge 4(n+1)$), NSGA-II preserves the non-dominated set on OneMinMax. Then, by Theroem~\ref{thm:m-OMM}, the expected number of fitness evaluations to find the Pareto front is $O(\mu n \log n) $, aligning with their result. Similarly, Bian \textit{et al.}~\cite{bian2023stochastic} and Nguyen \textit{et al.}~\cite{nguyen2015sibea} proved that when the population size is greater than the Pareto front size (i.e., $n+1$), both SMS-EMOA and $(\mu+1)$SIBEA preserve the non-dominated set on OneMinMax, resulting in an expected number of fitness evaluations of $O(\mu n \log n)$, consistent with Theorem \ref{thm:m-OMM}. For $m$OneMinMax with large constant $m$, Andre \textit{et al.}~\cite{nsga3} proved that NSGA-III preserves the non-dominated set on $m$OneMinMax, if it employs a set of reference points $\mathcal{R}_p$ with $p \ge 4n\sqrt{m}$, and with a population size of $\mu \ge (2n / m + 1)^{m / 2}$. Then, they derived that the expected number of fitness evaluations is $O(\mu n \log n)$, which is consistent with the result derived from Theorem \ref{thm:m-OMM}.

\textbf{$m$LeadingOnesTrailingZeroes:}  For the bi-objective LeadingOnesTrailingZeroes  problem (i.e., $m$LeadingOnesTrailingZeroes with $m=2$), Laumanns \textit{et al.}~\cite{LaumannsTEC04} and Giel~\cite{giel2003expected} proved that the expected number of fitness evaluations of SEMO and GSEMO for finding the Pareto front is $O(n^3)$. Brockhoff~\cite{brockhoff2008analyzing} proved that when the population size is greater than the Pareto front size (i.e., $n+1$), $(\mu+1)$SIBEA preserves the non-dominated set on LeadingOnesTrailingZeroes, then the Pareto front can be found in $O(\mu n^2)$ number of fitness evaluations in expectation. Zheng and Doerr~\cite{zheng2023first} proved that for the population size at least $4(n+1)$, NSGA-II preserves the non-dominated set on LeadingOnesTrailingZeroes and the expected number of fitness evaluations is also $O(\mu n^2)$. These results are all consistent with Theorem \ref{thm:m-LOTZ}. For $m\ge 4$, Laumanns \textit{et al.} \cite{LaumannsTEC04} proved that the expected running time of SEMO is $O(n^{m+1})$, which is consistent with the result in Theorem \ref{thm:m-LOTZ}, because the maximal population size during the optimization procedure is at most $(2n / m+1)^{m-1} \le n^{m-1}$. Andre \textit{et al.}~\cite{nsga3} proved that NSGA-III preserves the non-dominated set on $m$LeadingOnesTrailingZeroes, if it employs a set of reference points $\mathcal{R}_p$ with $p \ge 4n\sqrt{m}$, and with a population size of $\mu \ge (2n / m + 1)^{m-1}$. Then, they derived that the expected number of fitness evaluations is $O(\mu n^2)$, which is consistent with Theorem \ref{thm:m-LOTZ}.

\textbf{$m$OneJumpZeroJump:} For the bi-objective OneJumpZeroJump problem (i.e.,  $m$-OneJumpZeroJump with $m=2$), Doerr \textit{et al.}~\cite{doerr2021ojzj} proved that the expected number of fitness evaluations for GSEMO finding the Pareto front is $O((n-2k)n^k)$, which is consistent with Theorem~\ref{thm:m-OJZJ}, because the maximal population size during the optimization procedure is at most $(n-2k+3)$. When the population size is at least $4(n-2k+3)$ for NSGA-II~\cite{doerr2023first} and $(n-2k+3)$ for SMS-EMOA~\cite{bian2023stochastic}, both algorithms preserve the non-dominated set on OneJumpZeroJump and can find the Pareto front within $O(\mu n^k)$ number of fitness evaluations in expectation, also consistent with Theorem~\ref{thm:m-OJZJ}. For $m \le \log n$, Zheng and Doerr~\cite{zheng2024runtime} proved that when the population size is at least $(2n/m-2k+3)^{m/2}$, SMS-EMOA preserves the non-dominated set and can find the Pareto front within $O(M\mu n^k)$ number of fitness evaluations in expectation, where $M = (2n/m-2k+3)^{m/2}$, while Theorem~\ref{thm:m-OJZJ} shows that any MOEA that can preserve the non-dominated set can find the Pareto front in $O(3^{m/2} \mu n^k)$ expected number of fitness evaluations, implying a tighter bound.

\section{Conclusion and Discussion} \label{conclusion}


In this paper, we give general theorems for analyzing the expected running time of
MOEAs on the three common benchmark problems, $m$OneMinMax, $m$LeadingOnes-Trailingzeroes and $m$OneJumpZeroJump. These theorems show that if the population of MOEAs preserves the objective vectors of non-dominated solutions, their running time can be upper bounded. We apply them to derive the expected running time of SPEA2 for the first time, and also to analyze other MOEAs, deriving results consistent with previously known ones. We hope these theorems can be useful for theoretical analysis of MOEAs in the future. The theorems also suggest that when the population size is large enough, the analytical behavior of MOEAs may tend to be similar. Thus, it would be interesting to analyze the approximation performance of MOEAs with small populations, to better understand their differences.

In a parallel theoretical work, Wietheger and Doerr~\cite{wietheger2024near} proved near-tight running time guarantees for GSEMO on $m$OneMinMax, $m$CountingOnesCountingzeroes, $m$LeadingOnesTrailingZeroes, and $m$OneJumpZeroJump, and also transferred similar bounds to SEMO, SMS-EMOA and NSGA-III. Note that their proof techniques are superior and the bounds are tighter, as they not only considered the probability \( p \) of not finding a Pareto front point within a given number \( T \) of iterations, but also refined this process by considering each block individually. Specifically, they considered the probability \( p' \) of not achieving the optimization goal for each block within the given time. Using the union bound leads to \( p \leq m' p' \), where \( m' \) denotes the number of blocks. Let \( M \) represent the size of the Pareto front. Then, the probability of finding the complete Pareto front within \( T \) iterations is at least \( 1 - Mp \ge 1 - Mm'p' \). Their analysis of the optimization of blocks using the union bound allows them to obtain tighter time bounds.




%

\section*{Acknowledgments}
The authors want to thank the anonymous reviewers for their helpful comments and suggestions. This work was supported by the National Science and Technology Major Project (2022ZD0116600) and National Science Foundation of China (62276124). Chao Qian is the corresponding author. The conference version of this paper has appeared at PPSN’24.

%
%
%
\bibliographystyle{splncs04}
\bibliography{ppsn_spea2}
%


\clearpage



\end{document}